\begin{document}

\title*{Probabilistic Random Indexing for Continuous Event Detection}
\author{Yashank Singh and Niladri Chatterjee}
\institute{Yashank Singh, \email{syash0516@gmail.com}
\\ Niladri Chatterjee, \email{niladri@maths.iitd.ac.in}}
%
%
\maketitle

\small
\abstract{The present paper explores a novel variant of Random Indexing (RI) based representations for encoding language data with a view to using them in a dynamic scenario where events are happening in a continuous fashion. As the size of the representations in the general method of one-hot encoding grows linearly with the size of the vocabulary, they become non-scalable for online purposes with high volumes of dynamic data. On the other hand, existing pre-trained embedding models are not suitable for detecting happenings of new events due to the dynamic nature of the text data. The present work  addresses this issue by using a novel RI representation  by imposing a probability distribution on the number of randomized entries which leads to a class of RI representations. It also provides a rigorous  analysis of the goodness of the representation methods to encode semantic information in terms of the probability of orthogonality. Building on these ideas we propose an algorithm that is log-linear with  the size of vocabulary to track the semantic relationship of a query word to other words for suggesting the events that are relevant to the word in question. We ran simulations using the proposed algorithm for tweet data specific to three different events and present our findings. The proposed probabilistic RI representations are found to be much faster and scalable than Bag of Words (BoW) embeddings while maintaining accuracy in depicting semantic relationships.}
\vspace{-0.3cm}
\section{Introduction}
\label{sec:1}
Event detection has been a common application of Machine Learning and NLP. The classic tech-
niques in this regard typically involve Word Co-occurrence Matrix and its Singular Value Decomposition (SVD) \cite{SVD} to track meaning
and relationship between different words to detect an event. However, in modern times these
methods are found to be slow to cope up with the huge amount of new information that is created everyday on internet platforms, twitter being one of the most popular of them in this regard. Applications of these methods for detection
of dynamic events suffer from three major problems, namely arbitrariness of the underlying
language (English, here), exponentially increasing volume of data, and the curse of dimensionality.
\par Computer processing of natural languages is a generally perceived as a huge challenge because
\begin{itemize}
    \item words are often ambiguous, i.e. one word can have several meanings (polysemy)
    \item several words
may refer to the same concept (synonymy).
\end{itemize} On the other hand, in the context of online learning,
documents may arrive continuously. Many of them may have some unseen words and thereby increasing the size of the database, and also expanding the size of the vocabulary. The last one leads to the other problem i.e. the curse of dimensionality. In one-hot encoding documents are represented with a vector of a dimensionality equal to size of
the vocabulary which in a real environment is equal to the total number of words found so far.

An additional problem emerges with respect to event detection if the information is varying with time. Even if one divides the corpora on the basis of time, and applies SVD on the obtained co-occurrence matrix to track word relationships, the resulting reduced matrices would not be
comparable in general, because of their dimensional variations. This, in turn, would give little or no idea
about shift in a word’s association with other words. 
\par Some algorithms have been reported in the
literature with the aim of alleviating one or more of the above-mentioned issues. Among them are
 Latent Semantic Indexing (LSI) \cite{lsi}, Probabilistic Latent Semantic
Indexing (PLSI) \cite{PLSI}, Latent Dirichlet Allocation (LDA), \cite{blei2003latent} and other recent embedding-based schemes, such as Word2Vec \cite{word2vec}, BERT \cite{Devlin2019BERTPO}. However, these methods are computationally expensive, or require access to the whole
term-document frequency matrix during the semantic space construction. Although pre-trained embeddings are generally available for them, those are not capable of capturing the temporal variations which is essential for detecting new events. These disadvantages limit
their incorporation into an online environment. 

The present paper addresses the above problems by introducing the concept of Probabilistic Random Indexing of context vectors of words. We show that the Probabilistic Random Indexing space can substitute the conventional method of one-hot encoding and other learned embedding based approaches for representing relationship between words. We further show how the probability of orthogonality can affect accuracy of these words representations significantly to convey meaningful information. We also comment on the choice of dimension of random vector “n” and the number of non-zero entries “r” to be chosen for a data set and develop an algorithm to suggest a list of words which
could be related to an event.

The paper is organised as follows. Section 2 provides a brief description of several works where
Random Indexing has been used for some NLP applications. Section 3 provides a description of the Random
Indexing technique in detail. Section 4 discusses the novel probability distribution conceptualised in the work.
Section 5 describes event detection scheme. In Section 6 the experimental set up is explained, and the results
obtained are presented. Section 7 concludes the paper with some future research goal.

\vspace{-0.3cm}
\section{Related Work}
\label{subsec : 2.1}
Random Indexing (RI) based approaches for various NLP applications  have been pursued over the last two decades.  For illustration, Bruzon et al. \cite{Bruzn2015ExploringRI} exploits the RI method in the realm of profile learning. Kanvera et al. \cite{Kanerva2000RandomIO} leverage Random Indexing for Latent Semantic Analysis of text samples based on co-occurrences. Cohen et al. \cite{cohen2010reflective} proposed Reflective RI to alleviate shortcomings of Random Indexing in determining indirect relations between words. Basil et al. \cite{basile2015temporal} introduced the concept of creating temporally local word-spaces using Random Indexing for linguistic analysis of word relationships.  In another work \cite{Basile2016TemporalRI} the same authors use a similar approach by creating geometrical spaces of word meanings that consider several periods of time to analyse word relationships in news data and analysis of their cultural, social, and linguistic phenomenon.

\par Random Indexing has also been used for language detection by Joshi et al. \cite{Joshi2014LanguageRU}.  Sandin et al. \cite{Sandin2016RandomIO} extend the idea of Random Indexing to multidimensional arrays for inference of higher order statistical relationships in data in an online setting.  Chatterjee and Mohan \cite{chatterjee2007extraction} employed Random Indexing for extractive text summarization by removing the redundancy from text documents. Chatterjee and Sahoo \cite{Chatterjee2013EffectON, chatterjee2015random} exploit the Random Indexing of context vectors for the task of extractive text summarization. However, in their approach they create the word space by distributing +1’s in the upper half of the random vector and -1’s in the lower half sparsely. Some work has also been done exploiting the shift in the semantic vectors of words in corpus for the purpose of event detection  \cite{jurgens2009event}. This work however does not propose any distribution on the number of non-zero entries. Therefore, they cannot capture the dynamics of an online scenario.
\par The novelty of the present work is that it does not impose any restriction on placement of non-zero elements. Here, the +1’s and -1’s are distributed randomly throughout the context vector effectively reducing the length of the context vector. Furthermore, the novel idea of introducing a probability distribution on the number of non-zero entries has not been exploited in any previous work, as the aforementioned works fix the number of non-zero entries to a deterministic value. Moreover, no analysis with respect to probability of orthogonality has been done in the aforementioned works. The present work studies the effect of imposing a probability distribution on the number of non-zero entries in  the proposed representation scheme. Present work further provides comparison of the proposed representation method with the one-hot encoding (BoW) as well as the scheme presented in \cite{Chatterjee2013EffectON}.

 \section{Random Indexing}
\label{sec:2}

Random Indexing introduced by \cite{Kanerva2000RandomIO} for Latent Semantic Analysis (LSA) and is primarily pillared on three main assumptions:
\begin{enumerate}
\item Words having similar meaning occur within similar contexts, i.e. distributional hypothesis as explained in \cite{10.1145/365628.365657}
    \item Projection of a high dimensional space onto a lower dimensional space can be achieved such that the relative distances are not affected \cite{Johnson1984ExtensionsOL}
    \item Existence of a large amount of pseudo-orthogonal directions in high dimensional space \cite{HechtNielsen94ContextVectors}
\end{enumerate}

The approach proposed in \cite{Kanerva2000RandomIO} was further developed in \cite{Sahlgren2005AnIT} that formalized Random Indexing as a two-step process:  
\begin{itemize}
    \item First, each word in the corpus is assigned a context vector as in Eqn. (\ref{context}) which is a high dimensional vector sparsely filled with +1's and -1's
    \item In the second step, semantic vectors are constructed from the context vectors for each word w by scanning the text, and averaging over the context vectors of words occurring within a sliding context window each time the word w occurs in the text document.
\end{itemize}
In the definitions below and throughout this paper \textit{n} denotes the dimension of context vector and \textit{r} denotes the number of non-zero entries. We define RI space to be the vector space spanned by the context vectors.

\textbf{Context Vectors} : These are the vectors associated with each word specifying its context value in a n-dimensional domain. In the one hot encoding case, these vectors can be defined on N-dimensional space, where N is the total number of distinct words in vocabulary, as follows for a word $w_i$ in the vocabulary:
\begin{equation}
c_{i} = (0,0,0, \dots, 1_{i^{th}},...0,0,0) 
\end{equation}
In case of Probabilistic RI representations with $r=5$ non-zero entries where each entry can take the value either +1 or -1, a n-dimensional context vector for word $w_i$ may look like:
\begin{equation}
c_{i} = (0,+1,0,\dots,-1,0,+1,\dots, -1,0,+1,0) 
\label{context}
\end{equation}
Here, the indexes of the non-zero entries are selected at random in the n-dimensional vector, and each non-zero entry is selected to be +1 or -1 with probability $0.5$.
\par \textbf{Semantic Vectors} : These are the vectors that are part of the word co-occurrence matrix. The $i^{th}$ row of the matrix corresponds to the semantic vector of the word $w_i$ in the corpora which can be calculated by adding the context vectors of all the words in context range of the occurrences of word i as follows:
\begin{equation}
sv_{i} = \sum_{d\in W} \sum_{-m<i<m}c_i   
\label{semantic}
\end{equation}

Here, $W$ denotes the vocabulary and d is the occurrence of word $w_i$ in data set.\\

The advantage of constructing Random Indexing (RI) based vectors instead of one-hot vectors is the following. It allows to project the higher dimensional word space to a lower dimensional space spanned by randomly indexed context vectors which are nearly orthogonal, unlike the one-hot vectors which are orthogonal to each other. This is achieved by assigning non-zero entries from a set $S$ to some small, say $r$, but randomly chosen coordinates of vectors of pre-defined size $k$. Typically, the set $S$ is $\{+1, -1\}$, and $r$ is an even number. Note that if $r = 1$ then it boils down to one-hot encoding where the vectors are orthogonal.
\par The advantage of compromising on total orthogonality is that here a lower dimensional vector can represent more words (as $r$ increases) to represent the data set and track relationship between different words. The present work establishes a relation between the probability of orthogonality and accuracy of the representation to convey meaningful information. Moreover, as the size of semantic vector remains same through time, it becomes meaningful to compare semantic vectors of a word to track any possible shift in meaning of the word with time, This allows one to add a temporal component to Random Indexing, which we term as Temporal Random Indexing. The word capacity of the random indexed space with $k$ as the size of context vector and $r$ as number of non-zero entries is shown in Fig 1a. It grows exponentially as compared to linearly in simple case.
\vspace{-0.3cm}
\begin{subfigures}
\begin{figure}[!h]
   \begin{minipage}{0.48\textwidth}
     \centering
     \includegraphics[width=\linewidth, height = 4cm ]{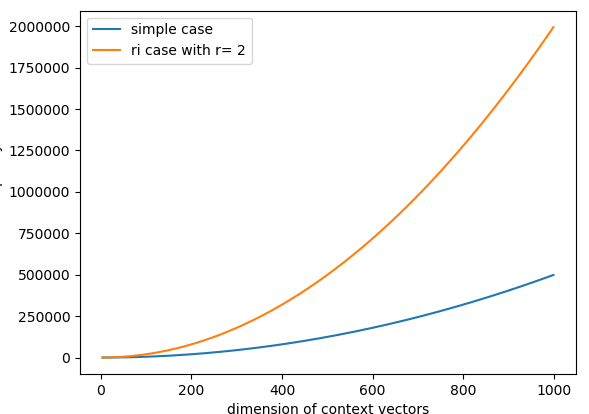}
     \caption{Word Capacity of Standard RI Case when $r=1:$ $^{n}C_{r}$ vs. RI case : $^{n}C_r\times 2^r r=2$ }
    \label{fig1a}
   \end{minipage}\hfill
   \begin{minipage}{0.48\textwidth}
     \centering
     \includegraphics[width=\linewidth, height=4cm]{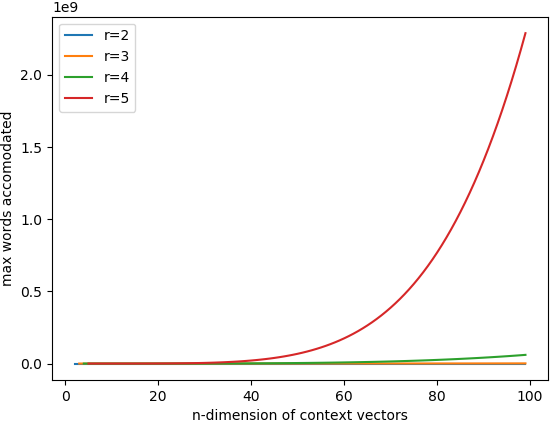}
     \caption{Word Capacity of Standard RI case with choice between 1 and -1 for each entry : $^{n}C_{r}*2^{r}$ }
    \label{fig1b}
   \end{minipage}
\end{figure}
\end{subfigures}
\vspace{-0.3cm}
\subsection{Word Representation Capacity }
\vspace{-0.3cm}
The number of distinct words that can be represented using a word embedding is called its word representation capacity. Size of the BoW embeddings grows linearly with the size of the vocabulary. However, the size of RI space, i.e. the number of possible distinct context vectors, is given by
\begin{equation}
N(n,r,K) = \    ^{n}C_r\times K^r 
\label{wordsize}
\end{equation}where, $n$ is the size of the context vector, $r$ is the number of non-zero entries and K is the possible choices for a non-zero entry. When  $r = 1$ the typical choice of non-zero value is $1$; when $r = 2$ the values are from the set $\{+1, -1\}$. Thus the word representation capacity of RI-based representations increases exponentially with $n$ keeping $K$ and $r$ fixed as shown in Fig \ref{fig1a},\ref{fig1b}. This allows one to accommodate a much larger vocabulary with the same vector size.  For illustration, an one-hot embedding of vector size $1000$ can accommodate a vocabulary of maximum $1000$ word.  But even with  $K = 2$  and $r = 1$  its capacity is  $999 \times 500  = 4,99,500$.   If  $r  = 2$, the capacity increases to $19,98,000$.
\vspace{-0.3cm}
\subsection{Probability of orthogonality}
\label{subsec :2.3}
The probability of orthogonality is the probability that two randomly selected context vectors in the $n$ dimensional space will have the inner product as zero. As shown in \cite{Chatterjee2013EffectON}, the deviation from the classical one-hot representation implies compromise on the probability of orthogonality. One may observe that accuracy of the representation in conveying meaningful semantic information decreases with the decrease in the probability of orthogonality (explained in Section \ref{results}). Hence the aim is to maximize the probability of orthogonality of two randomly selected context vectors from the RI space while decreasing the dimension of the context vectors to represent semantic information, and also to preserve word relationship information accurately. When the non-zero entries can only take the value $+1$, the probability of orthogonality for $n,r$ is given by
\begin{equation}
 P_{ortho}= \frac{^{n-r}C_r}{N(n,r,1)}
\end{equation}
Here the numerator denotes the number of possible choices of orthogonal vector for a given vector $v$ where $r$ non-zero entry indices are common with $v$. The denominator denotes the total number of choices given by $N(n,r,1)$. Note when $r=1$, it boils down to one-hot encoding case and the vectors are always orthogonal to each other. However, in the RI case with $r=2$ i.e. choice of non-zero entry being $+1,-1$, the probability can be calculated by enumeration of vectors only for small values of r. However, these are readily obtained by Eqn. (\ref{portho}) \footnote{A more generic and rigorous treatment is provided in Section 4.1.}. These are special cases of Eqn. (\ref{portho}) where $r_{1}=r_2=r$ and $P_{r}(r)=1$. Here N(n,r,K) is from Eqn. (\ref{wordsize})
 \begin{equation}\nonumber
    r=2, K=2 : P_{ortho}= \frac{^{n-2}C_2\times2^2 +2}{N(n,2,2)}
\end{equation}
\begin{equation}\nonumber
    r=3, K=2 : P_{ortho}= \frac{^{n-3}C_3\times2^3 +^3C_2\times^{n-3}C_1\times2\times2}{N(n,3,2)}
\end{equation}
\begin{equation}\nonumber
    r=4, K=2 : P_{ortho}= \frac{^{n-4}C_4\times2^4 +^4C_2\times^{n-4}C_2\times2^2\times2 + 6}{N(n,4,2)}
\end{equation}
\begin{equation}\nonumber
     r=5, K=2 : P_{ortho}= \frac{^{n-5}C_5\times2^5 +^5C_2^{n-5}C_3\times2^3\times2 + ^5C_4^{n-5}C_1\times2\times6}{N(n,5,2)}
\end{equation}
\begin{equation}\nonumber
     r=6, K=2 : P_{ortho}= \frac{^{n-6}C_6\times2^6 +^6C_2^{n-6}C_4\times2^4\times2 + ^6C_4^{n-6}C_2\times2^2\times6+20}{N(n,6,2)}
\end{equation}
The values are plotted in  Fig.\ref{fig2a}, \ref{fig2b}. It can be observed that for a given n the probability of orthogonality decreases as $r$ increases. Also, these figures show that the probability saturates at a faster rate ($\geq 0.9$) for small values of $r$, and after that the increase in probability is marginal. The cut-off values of $n$ are obtained against probability of orthogonality for 
\begin{itemize}
    \item Standard RI case with entries randomised with only $+1$
    \item Standard RI case with non-zero entries taking values in $\{+1, -1\}$.
\end{itemize} It was observed while for a given probability of orthogonality, $n$ is marginally lower for RI case but with a much larger word capacity which motivates randomization of entries between $\{+1, -1\}$.

\begin{subfigures}
\begin{figure}[t]
   \begin{minipage}{0.48\textwidth}
     \centering
     \includegraphics[width=\linewidth]{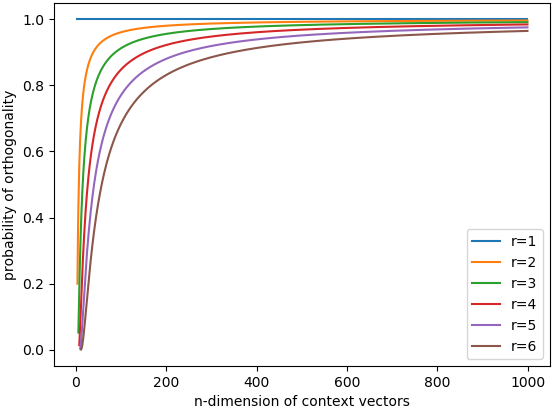}
      \caption{Probability of orthogonality of 2 vectors randomly selected in the RI space where each non zero entry is 1}
    \label{fig2a}
   \end{minipage}\hfill
   \begin{minipage}{0.48\textwidth}
     \centering
     \includegraphics[width=\linewidth]{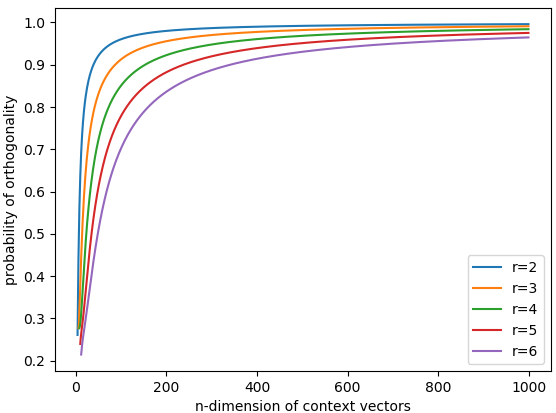}
     \caption{Probability of orthogonality of 2 vectors randomly selected in RI space where non zero entry can be 1 or -1}
    \label{fig2b}
   \end{minipage}
\end{figure}
\end{subfigures}
\vspace{-0.5cm}
\subsection{Probability of orthogonality of a randomly chosen subset}
In actual practice, to represent a vocabulary that contains W distinct words, a subset of the set of all the vectors formed from Random Indexing of $n$ dimensional context vectors with $r$ non-zero entries is chosen. This is done to ensure a rather high probability of orthogonality and to account for any new context vectors that may be added to the vocabulary. To motivate the choice of n and r for practical purposes the graphs in Fig. 3a, 3b depict the probability of orthogonality in a randomly chosen subset vs. the size of the representation capacity for fixed values of $n, r$. It can be observed that initially the probability fluctuates but it becomes stable after sample size reaches 200. The steady state value increases with $n$ for a given value of $r$. The above figures show that that by decreasing the size of the RI-space below a threshold does not increase the probability of orthogonality for a randomly chosen subset of the context vectors. So decreasing the size of RI-space keeping the dimension of vectors high, is shown to be non efficient. Hence the values of n and r are to be chosen in a way that maximises the probability of orthogonality while minimising the dimension of context vectors.

\begin{subfigures}
\begin{figure}[t]
   \begin{minipage}{0.48\textwidth}
     \centering
     \includegraphics[width=\linewidth]{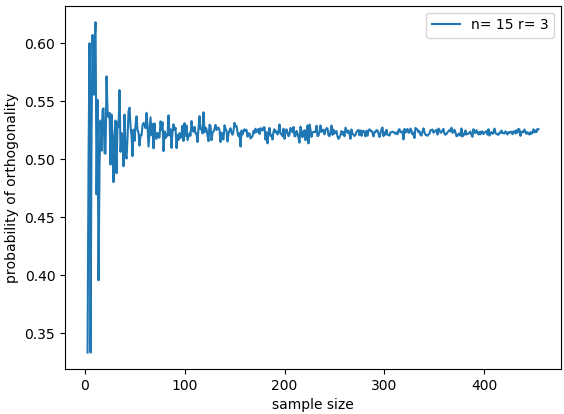}
      \caption{Probability of orthogonality of a subset for n=15, r=3}
    \label{fig3a}
   \end{minipage}\hfill
   \begin{minipage}{0.48\textwidth}
     \centering
     \includegraphics[width=\linewidth]{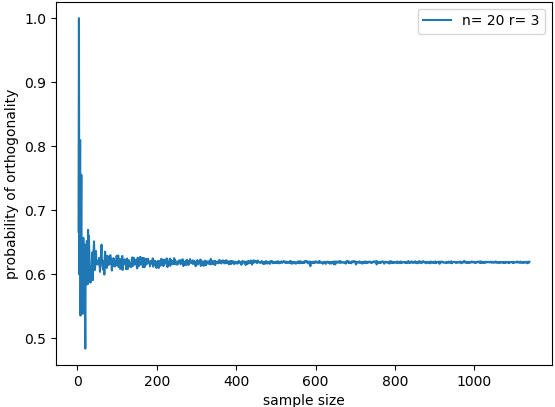}
     \caption{Probability of orthogonality of a subset for n=20, r=3}
    \label{fig3b}
   \end{minipage}
\end{figure}
\end{subfigures}

\vspace{-0.3cm}
\section{Introducing a Probability Distribution on r}
This section provides a broader framework for Random Indexing based approaches. Here we examine the effect of introducing a probability distribution of the number of non-zero entries. It can be seen from Fig. 2a, 2b that for a given value of $n$ as the value of $r$ is increased, the probability of orthogonality decreases, but the representation capacity increases. This motivates the idea of a non-constant non-zero number of entries in the representation. Therefore instead of choosing a fixed value of $r$, we propose a distribution on one the values of $r$. We thus try to randomize the number of non-zero entries by defining a probability distribution which takes into account the following:
\begin{itemize}
    \item For a given value of $n$ lesser the value of $r$ higher is the probability of orthogonality
    \item Higher value of $r$ has more word representation capacity.
\end{itemize}   Thus, in a dynamic case where the word space is continuously evolving, introducing a distribution on r takes care of the probability of orthogonality and total number of words that can be accommodated.
\subsection{Probability of orthogonality of two randomly indexed vectors of length n with $r_1$ and $r_2$ non-zero entries }
Let $v_1$  and $v_2$ be the vectors of length $n$ with number of non-zero entries $r_1$  and $r_2$, respectively. Without loss of generality let $r_2 \geq r_1$ and $n\geq r_2 + r_1$. Then the number of ways in which a vector can be chosen with $r_1$ non-zero entries to make dot product zero be denoted by $\eta(r_1 , r_2)$.
This can be broken down in the following two cases: \\
\textbf{Case 1 :} after choosing $r_2$ indices in $v_2$, the $r_1$ indices in $v_1$ are chosen from the remaining $n-r_2$ places after is \begin{equation}\nonumber
    ^{n-r_2}C_{r_1}\times2^{r_1}
\end{equation}
\textbf{Case 2 :} $2k$ (where $1\leq k\leq r_1/2$ if $r_1$ is even , else $k \leq r_1-1/2$) common places are chosen out of $r_2$ and the remaining $r_1-2k$ out of $n-r_2$  places, then the number of such combinations are given by 
\begin{equation}\nonumber
    \phi(2k, r_2)\times2^{r_1-2k}
\end{equation}
Where $\phi(2k, r_2)$ is the number of combinations where $2k$ entries are common with $r_2$ entries and the dot product is zero. Since the dot product is zero, it must be an even sum of +1 and -1 (as these are the only two possibilities of the product of non-zero entries). Since there are $2k$ such products exactly k must be +1 and -1. So the problem boils down to choosing k entries out of $r_2$ and making the product of the corresponding terms +1 and the other k as -1. For a given vector $v_2$ making +1 and -1 for each entry has only 1 possibility. This gives the following two results :
\begin{equation}
    \phi(2k, r_2) =  ^{r_2}C_{2k}\times^{2k}C_k
\end{equation}
\begin{equation}
    \eta(r_1 , r_2) =   \sum_{k}\phi(2k, r_2)\times2^{r_1-2k}\times ^{n-r_2}C_{r_1-2k}  \ \ \   \ \ 0\leq k \leq \lfloor r_1/2\rfloor
\end{equation}
Also the total number of ways of choosing the vector with $r_1$ non-zero entries is $^{n}C_{r_1}\times2^{r_1}$ which is taken care of by allowing k to be 0. Hence, the probability of orthogonality is given by : 
\begin{equation}
P_{ortho}(r_1, r_2 \vert n) = \frac{\eta(r_1 , r_2)}{N(n,r_1,2)}
\label{orthoprob}
\end{equation}

This gives a class of RI representations based on the underlying probability distribution imposed on $r$. For varying discrete distributions imposed on r taking values from the set S, a number of Random Indexing based models can be obtained. 
\par Furthermore, each of these discrete distributions can be implemented by making discretized bins in the uniform distribution over an interval and mapping them to a discrete value of $r$ in the set S. Let us denote the probability of $r$ taking the value $\alpha$ to be $P_r(\alpha)$. Then denoting $P(v_1, v_2)$ as the probability of orthogonality of two vectors $v_1$, $v_2$ with $r$ taking values from the set $S =$ \{$r_1$, $r_2$, ..., $r_n$\}  where $r_i \in \mathbb{N} \ \ \forall i \geq 1 $, the expected value of the probability of orthogonality is given by  
 \begin{equation}
     \mathop{\mathbb{E}}(P(v_1, v_2)) = \sum_{r_i\in S, r_j\in S} P_r(r_i)\times P_r(r_j)\times P_{ortho}(r_i, r_j \vert n) 
     \label{portho}
 \end{equation}
where $P_{ortho}(r_i, r_j)$ is the probability of orthogonality given two context vectors of length n having $r_i$ and $r_j$ non-zero entries, respectively. $P_r(r_i)$ denoting the probability of $r$ taking value $r_i$ in set $S$. The results for deterministic case presented in Section 2.3 are degenerate cases of Eq.(\ref{portho}) where $r_1=r_2=\alpha$ and $P_r(\alpha)=1$. Hence $\mathop{\mathbb{E}}(P(v_1, v_2))= P_{ortho}(\alpha,\alpha \vert n).$

\subsection{Representation Capacity and Probability of orthogonality}
Here we illustrate the probability of orthogonality and representation capacity of the Probabilistic RI case when restricting the distribution of $r$ to a uniform discrete distribution over the set $S=\{2,3,4,5,6\}$. We plot the representation capacity and the expected probability of orthogonality $\mathop{\mathbb{E}}(P(v_1, v_2))$ given by Eqn. (\ref{portho}). These plots are shown in Fig. \ref{fig4a} and \ref{fig4b} respectively. It is observed that the representation capacity curve for Probabilistic RI case runs closest to the largest representation capacity, i.e. the representation capacity for $r=6$. This further motivates the idea of distributing r probabilistically. 
\par Another insightful observation is that Probabilistic RI gains significantly on the representation capacity while not compromising on the probability of orthogonality by uniformly distributing r in the set $\{2,3,4,5,6\}$ i.e. each context vector can have $2$ or $3$ or $4$ or $5$ or $6$ non-zero entries with equal probability. An n-dimensional context vector with $r$ many non-zero entries will have zeros at $n-r$ indices and non-zero entries (+1 or -1) at the remaining $r$ places as given in Eqn. (\ref{context}). Hence the proposed representations further alleviate the problem of large corpus sizes that change dynamically as the Probabilistic RI space can accommodate large number of context vectors without compromising on the probability of orthogonality. 

\begin{subfigures}
\begin{figure}[b]
   \begin{minipage}{0.48\textwidth}
     \centering
     \includegraphics[width=\linewidth, height=3.5cm]{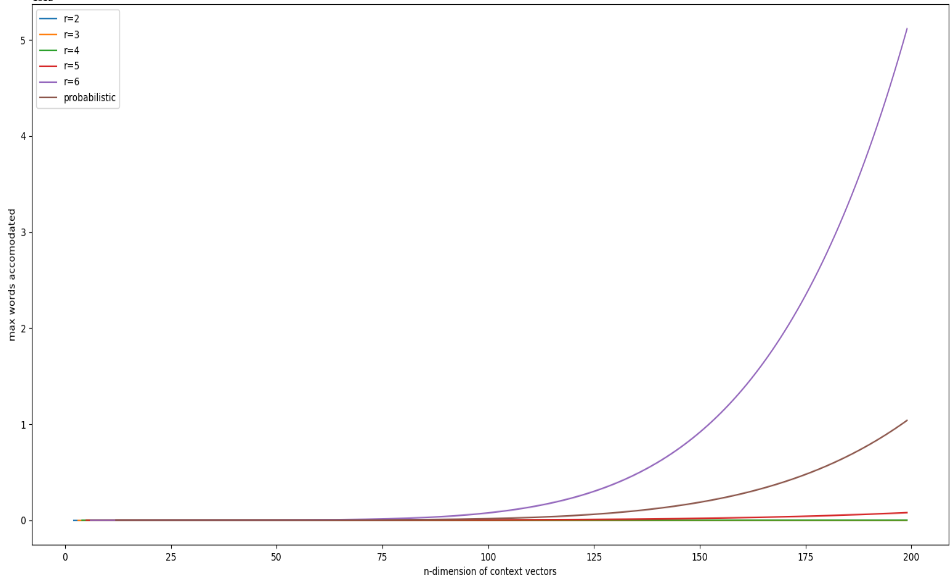}
     \caption{Expectation of word capacity}
    \label{fig4a}
   \end{minipage}\hfill
   \begin{minipage}{0.48\textwidth}
     \centering
     \includegraphics[width=\linewidth, height=3.5cm]{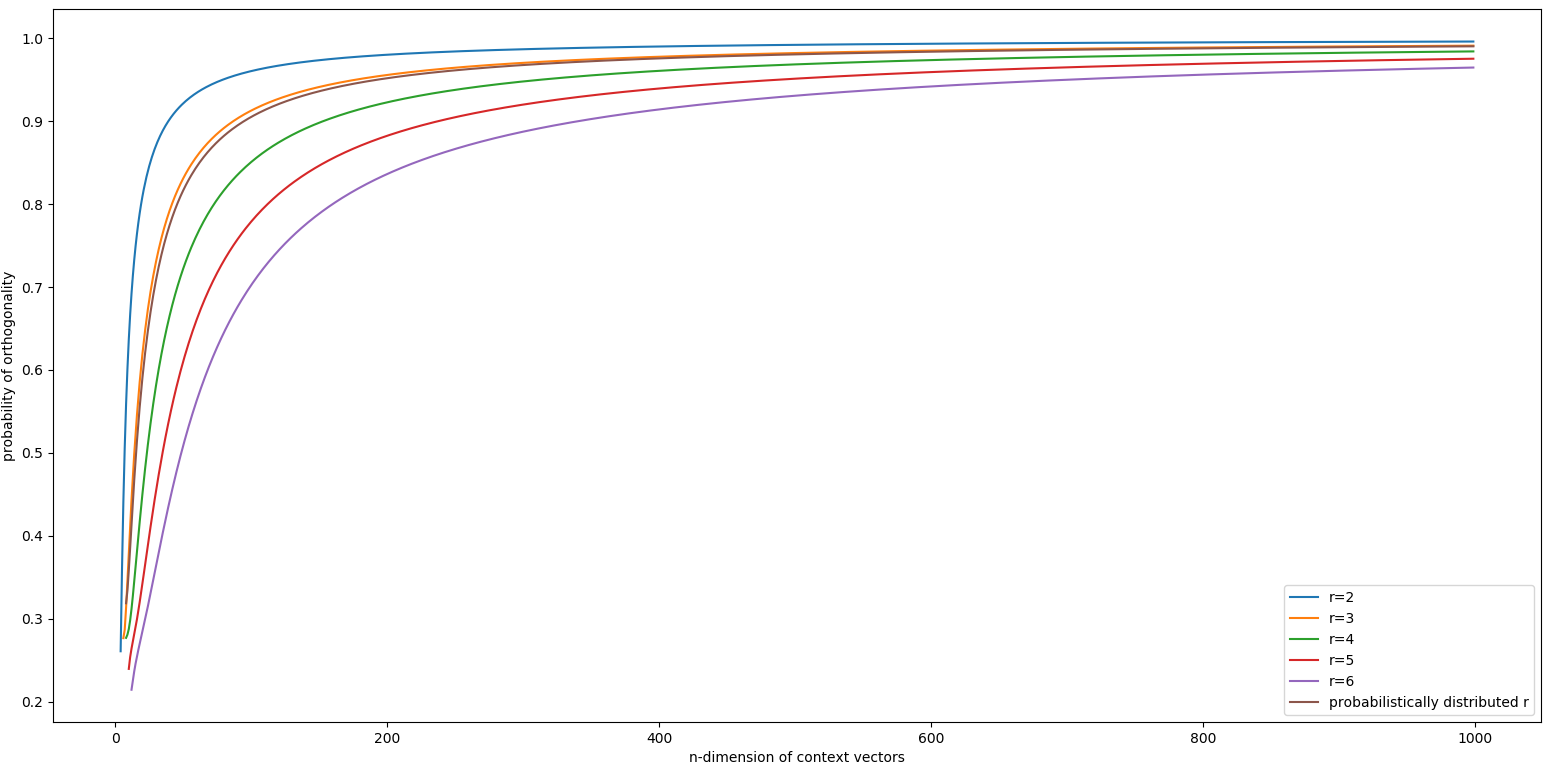}
     \caption{Expectation of prob. of orthogonality}
    \label{fig4b}
   \end{minipage}
\end{figure}
\end{subfigures}

\section{Event Detection}
Event detection through sources like tweets is useful for two reasons. Firstly, since these sources are not regulated, a variety of events can be detected that may otherwise not be shown by the mainstream media due to biases. Secondly, these sources reflect the event occurrence much faster than the newspapers or media houses, who normally take time to curate their content. Hence in this work we use twitter data. i.e. tweets, for three different event detection scenarios as given in Sec. 6. 
\\ Since in language data the closely associated words usually occur in the same context, i.e. close to each other, their semantic vectors are closely embedded in the n-dimensional space. For this work a context window of five words on either side of the key word is considered to be the context of a word as given in Eqn. (3). For the boundary words, the context is taken only in one direction. 
\par In a dynamic scenario with the change in the language data with time, the semantic vectors associated with the words change displaying a semantic shift that can be used to determine the changing association between a query word with respect to the other words of the corpus. Since the dimension of the context vectors and the semantic vectors are not altered in the RI-based representations, a subset of semantic vectors can be obtained spanning the time period of interest and are subjected to further analysis to detect the occurrence of an event.

\subsection{Tracking Semantic Shift to Detect Event}
Given a query word $w_q$ the semantic shift of the vectors associated with other words with respect to $w_q$ is tracked. This is possible as the length of the semantic vector remains constant in time. This allows one to circumvent the problem of new words being introduced and words being deleted in the vocabulary at different instants of time. The values of n and r are so chosen that the word representation capacity is greater than N i.e. the size of vocabulary W to ensure a high probability of orthogonality for representation of data set. As the size of the RI space grows exponentially with n (dimension of context vectors), a random subset S of the total RI space is chosen such that $ \vert  S \vert  =   W \vert $. Context vectors are then assigned to each distinct word in the corpora. A bijective map is created from the vocabulary to random subset chosen $M : W\rightarrow S$. The context vectors are stored in a dictionary according to this map. This allows one to add context vectors to calculate semantic vectors without having to traverse them explicitly. Thus a context vector C\textsubscript{i\  }for word $w_i$ in word space is a mapping \textbf{$C_i: N^r\rightarrow \{-1,1\}^r$}. To calculate semantic vectors, the word set is traversed once. The angle between semantic vectors of word $w_i$ and $w_j$ is calculated as :
\begin{equation}
\label{theta}
\theta_{i,j} = \cos^{-1}(\frac{\mathbf{v_i}\cdot\mathbf{v_j}}{\vert\vert\mathbf{v_i}\vert\vert\cdot\vert\vert \mathbf{v_j}\vert\vert})    
\end{equation}
where $\mathbf{v_i}, \mathbf{v_j}$ are the semantic vectors corresponding to the words $w_i, w_j$, respectively.
\subsection{Algorithm for suggesting words related to event}
This section provides the proposed algorithm to detect an event relating to a particular query word $w_q$, by tracking the semantic shift of the words with respect to $w_q$ corresponding to time instances $t$ and $s$, where $t < s$ . This is done by ranking the words according to a coefficient  and compiling a list of words that are most relevant to the event with respect to the query word $w_q$. Let $W_t$ and $W_s$ be the vocabulary at the time instance $t$ and $s$, respectively. The following steps explain the algorithm. 

\begin{enumerate}
\item Sort the list of words in $W_t$ in ascending order of the absolute value of the angular difference measured using cosine similarity from the query word $w_q$ to get $V_t$. Sort $W_s$ in a similar way to get $V_s$.
\item Calculate normalized frequencies $f_t(w)$, $f_s(w)$ of the words in both the data sets $W_t$ and $W_s$ independently as follows, where $\mathbbm{1}$ is the indicator function : 
\begin{equation}
    f_{\tau}(w) = \frac{\sum_{v \in W_t} \mathbbm{1}_{v = w}}{\vert W_t \vert}   
\end{equation}
where $\tau=\{t,s\}$. This is done to ensure words occurring more frequently do not dominate the coefficient.
\item Select a value for  a hypermeter $\kappa$, where $\kappa$ denotes the maximum number of  words to be considered for analysis from the  post data set $W_s$. Let $\mathcal{W}$ be the set comprising the top $\kappa$ words  $w_1, \dots, w_{\kappa}$. 
\item Calculate suggestion coeff $S(w)$for each word $w \in \mathcal{W}$ as 
\begin{equation}
 S(w) = \Delta \times  \frac{rank_{V_t}(w)}{rank_{V_s}(w)} \times \frac{f_t(w)}{f_s(w)}  
 \label{rank}
\end{equation}
$\Delta = \vert \theta_{post} - \theta_{pre}\vert$ where $\theta$ is computed using Eqn. (9) for word $w, w_q$ in both $W_s, W_t$. \\
\item Sort $\mathcal{W}$ in decreasing order of $S(w)$.
\item Filter out words $w$ such that $rank_{V_t(w)} \leq \kappa$ and report top-$P$ words.
\end{enumerate}

\hspace{-0.5cm}The suggestion coefficient takes into account the following :
\begin{itemize}
    \item The words with larger semantic shifts are given more weight.
    \item The words which became closer to the query word at the later time $s$ (i.e. they were not closer at earlier time $t$)  are given more weight. This is done so that the words that continue to be closer are filtered out as their ranks will not change much in the sorted list.
    \item The words which were infrequent at time $t$, but frequent at time $s$ are given more weights. This is because those which started appearing frequently later are more likely to be related to some event.
\end{itemize}

\begin{theorem}
The proposed event detection algorithm is log linear in  the size of vocabulary.
\end{theorem}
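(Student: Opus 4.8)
The plan is to analyze the running time of the six-step algorithm operation by operation, showing that every step is bounded by $O(|W| \log |W|)$ where $|W| = \max(|W_t|, |W_s|)$, and that no step exceeds this bound, so that the total is log-linear in the vocabulary size. I would set $N = |W|$ at the outset and treat the dimension $n$ of the context vectors and the window size $m$ as fixed constants (they do not scale with the vocabulary), so that computing a single cosine similarity via Eqn. (\ref{theta}) is $O(n) = O(1)$, and likewise assembling each semantic vector by Eqn. (\ref{semantic}) costs $O(1)$ per occurrence, giving $O(N)$ over the whole corpus scan. This constant-dimension observation is exactly what the RI representation buys us, and it is worth stating explicitly since it is the crux of why the algorithm is fast relative to BoW, where the vector length grows with $N$.

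Next I would walk through the steps. Step 1 sorts $W_t$ and $W_s$ by angular distance from $w_q$; each comparison key is an $O(1)$ cosine computation, so a comparison sort runs in $O(N \log N)$. Step 2 computes normalized frequencies $f_t, f_s$ by a single linear pass with a hash map, $O(N)$. Step 3 selects the top $\kappa$ words; since $\kappa$ is a fixed hyperparameter this is $O(\kappa) = O(1)$ given the sorted list from Step 1, or $O(N)$ with a linear selection, either way within budget. Step 4 computes $S(w)$ for the $\kappa$ words in $\mathcal{W}$; each $S(w)$ needs the ranks $\mathrm{rank}_{V_t}(w)$ and $\mathrm{rank}_{V_s}(w)$ (an $O(1)$ lookup if positions were recorded during the Step 1 sort), the frequency ratio ($O(1)$ lookup from Step 2), and $\Delta$, which is a difference of two cosine angles, $O(1)$; so Step 4 is $O(\kappa) = O(1)$. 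Step 5 sorts $\mathcal{W}$, of size $\kappa$, in $O(\kappa \log \kappa) = O(1)$. Step 6 filters and reports the top $P$ words, $O(\kappa)$. Summing, the dominant term is the Step 1 sort at $O(N \log N)$, together with the $O(N)$ corpus scan that builds the semantic vectors and the $O(N)$ frequency pass; the total is $O(N \log N)$, i.e. log-linear in the size of the vocabulary.

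The one subtlety I would be careful about — and I expect this to be the main thing a skeptical reader would press on — is the treatment of the parameters $n$, $r$, $m$, $\kappa$, $P$ as constants. I would justify $n$ and $r$ being fixed by appealing to the earlier analysis: the word representation capacity $N(n,r,K) = {}^{n}C_r \times K^r$ grows exponentially in $n$ (Section 2.1, Eqn. (\ref{wordsize})), so a modest fixed $n$ already exceeds any realistic vocabulary size while keeping the probability of orthogonality high, and in particular $n$ need only grow like $\log N$ in the worst case, which would at most multiply the bound by a $\log$ factor and still leave it polylogarithmic per operation. The window size $m$ and the output/analysis cutoffs $\kappa$, $P$ are genuinely user-chosen constants independent of $N$. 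With these conventions fixed, the bound $O(N \log N)$ follows immediately from the step-by-step accounting above, which is the content of the theorem.
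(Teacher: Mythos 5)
Your proposal is correct and follows essentially the same route as the paper's own proof: a step-by-step cost accounting in which context/semantic vector construction and the angle computations are linear in $|W|$ (with $n$, $r$, $m$, $\kappa$ treated as constants), the sort by angular distance dominates at $O(|W|\log|W|)$, and the final $\kappa$-sized sort and filter are negligible. Your additional remark that $n$ need only grow like $\log|W|$ because the representation capacity is exponential in $n$ is a slightly more careful justification of the ``constants'' assumption than the paper gives, but it does not change the structure of the argument.
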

\begin{proof}
Let $n$ be the dimension of context vectors, $r$ be the number of non-zero entries, $m$ be the context range and \textbf{$ \vert W\vert $} be the size of vocabulary. Firstly generating $\vert W \vert$  context vectors which is \textbf{$O(r\vert W\vert )$}. Then $ \vert W\vert $ semantic vectors are generated which involves adding at most $2m$ context vectors for each word, hence is \textbf{$O(2 m\vert W \vert )$}. The calculation of angle $\theta$ for each word $w \in W$ is again $O(n)$ as max number of non-zero entries can be $n$, and one semantic vector is traversed once for a dot product. The sorting of all the semantic vectors is O($\vert W\vert\log\vert W\vert$). The sorting of the final list is O($\kappa\log \kappa$). So overall time complexity for a given $m$ is $O(r\vert W \vert+2 m  \vert  W \vert + \vert W\vert\log\vert W\vert +\kappa\log \kappa +n) = O( \vert W\vert\log\vert W\vert )$ as the constants $n,r,m, k$ are small as compared to $ \vert W \vert $ i.e. $N$.
\end{proof}
 Hence this runs in log linear time of vocabulary. Note that when the code is run different times, the angles for probabilistic RI representations may be different, this is because a random subset of RI space is chosen each time as context vectors and this may differ. But the general trend in change of semantic vectors is same hence can be used for event detection.

\section{Experimental Setup}
For experimentation the following events were chosen : 
\begin{enumerate}
    \item `launch of iPhone X'  with $t = Oct \ 2017, \ s = Nov \ 2017$
    \item `demonetization in India'  with $t = Oct \ 2016,\  s = Nov \  2016$
    \item `corona virus outbreak'  with $t = Oct \ 2019,  \ s = May  \ 2020$
\end{enumerate} Each dataset comprises of tweets before the event and after the event with time slice corresponding to one month. These tweets were imported using the Python library \texttt{GetOldTweets3} \footnote{https://pypi.org/project/GetOldTweets3/} and setting the area filter to New York and New Delhi in order to avoid tweets of non-English languages. Top tweets for the month are imported for each time instant $t,s$ respectively. The proposed algorithm is run on the pair of time instants $(t, s)$ to report suggested words post-event. Since there is no beginning time instant for the pre-event dataset, words are suggested based on the normalized frequency and angular difference of semantic vectors measured from the query word in the data set at time instant $t$. We report top-5 ranked suggested words for the query word `India', `virus' and `iPhone'. 
\par Additionally, for testing the robustness of the method with respect to the scale of the dataset, we import the following two types of datasets for analysis of `launch of iPhone X' event: 
\begin{itemize}
    \item Data Set(small/clean) :  number of distinct words $ \sim $ 600, imported by taking top tweets, cleaner data set.
    \item Data Set(large/noisy) : number of distinct words $ \sim $ 1700, imported by taking non-top tweets, random tweets also included which add noise.
\end{itemize}
We present results of suggested words for both the datasets to gauge resistance to noise and scalability. Further, we also compare the method with the Random Indexing method proposed in \cite{Chatterjee2013EffectON}.
\subsection{Data Preprocessing}

The tweets retrieved are first converted to string for further processing by removing the following.

\begin{enumerate}
\item \textit{URLs and mentions: }URLs contain the key character sequences $``$http$"$  and $``$.com$"$ , mentions contain ``@'' these are used to identify links, usernames and remove them as they do not provide any meaningful information in our case.


\item \textit{$\#$ tags and other special characters: }$\#$ tags and other characters such as ! . , $``$ $\&$  ] [ etc. are commonly used which mask the true meaningful words, these are replaced with a blank space wherever found in the string to converge to the core word.

\item \textit{Stop Words:} These are a part of English sentences but do not provide any meaningful observation. They are removed from the data set using \textit{nltk }library of python and importing English stop words, so as not to add these words to our final data set if they belong to the set of stop words.
\end{enumerate}

\begin{table*}[h]

 \label{table3}\centering\caption{Top-5 suggested words related to the event concerning the query word ``India''}
 \begin{adjustbox}{max width=\textwidth}
\begin{tabular}{p{1.79in}p{1.79in}p{1.79in}}
\hline
\multicolumn{1}{|p{1.79in}}{} & 
\multicolumn{1}{|p{1.79in}}{\textbf{Pre-event}  } & 
\multicolumn{1}{|p{1.79in}|}{\textbf{Post-event}} \\
\hline
\multicolumn{1}{|p{1.79in}}{\textbf{Classical/BoW \newline $n= \sim  600, \sim  1700$}} & 
\multicolumn{1}{|p{1.79in}}{\textit{`sardarpatel', `delhi', `NDTV', \newline	`diwali', 	`pakistan'}} & 
\multicolumn{1}{|p{1.79in}|}{\textit{`demonetization', 	`modi', `currency', \newline	`blackmoney',	`rupee'}} \\
\hline
\multicolumn{1}{|p{1.79in}}{\textbf{Deterministic RI  \newline $n=12, r=6$}} & 
\multicolumn{1}{|p{1.79in}}{\textit{`delhi', `diwali',	`china', \newline	`travel',	`NDTV'}} & 
\multicolumn{1}{|p{1.79in}|}{\textit{`demonetization', `currency',\newline `powermove',  `ban', `modi'}} \\
\hline
\multicolumn{1}{|p{1.79in}}{\textbf{Probabilistic RI } \newline r distributed uniformly in \newline $ \{ 2,3,4,5,6\}  , n=12$} & 
\multicolumn{1}{|p{1.79in}}{\textit{`diwali', `sardar patel', `NDTV', \newline	`hockey', `delhi'}} & 
\multicolumn{1}{|p{1.79in}|}{\textit{`demonetization', `modi',	`rupee' \newline	`jaihind',	`ban'}} \\
\hline

\end{tabular}
\end{adjustbox}

 \label{table4}\centering\caption{Top-5 suggested words related to the event concerning the query word ``virus''}
 \begin{adjustbox}{max width=\textwidth}
\begin{tabular}{p{1.79in}p{1.79in}p{1.79in}}
\hline
\multicolumn{1}{|p{1.79in}}{} & 
\multicolumn{1}{|p{1.79in}}{\textbf{Pre-event}  } & 
\multicolumn{1}{|p{1.79in}|}{\textbf{Post-event}} \\
\hline
\multicolumn{1}{|p{1.79in}}{\textbf{Classical/BoW \newline $n= \sim  600, \sim  1700$}} & 
\multicolumn{1}{|p{1.79in}}{\textit{`influenza', `HIV', `computer', \newline	`slenderman', 	`hepatitis'}} & 
\multicolumn{1}{|p{1.79in}|}{\textit{`corona',	`pandemic',	`vaccine', \newline	`covid',	`china'}} \\
\hline
\multicolumn{1}{|p{1.79in}}{\textbf{Deterministic RI  \newline $n=12, r=6$}} & 
\multicolumn{1}{|p{1.79in}}{\textit{`HIV', `influenza',	`trump', \newline	`ebola',	`computer'}} & 
\multicolumn{1}{|p{1.79in}|}{\textit{`corona',	`covid', `quarantine', \newline	`pandemic',	`ban'}} \\
\hline
\multicolumn{1}{|p{1.79in}}{\textbf{Probabilistic RI } \newline r distributed uniformly in \newline $ \{ 2,3,4,5,6\}  , n=12$} & 
\multicolumn{1}{|p{1.79in}}{\textit{`HIV', `trump', `computer', \newline	`ebola', `influenza'}} & 
\multicolumn{1}{|p{1.79in}|}{\textit{`corona',	`vaccine',	`pandemic',	\newline `covid', 	`quarantine'}} \\
\hline

\end{tabular}
\end{adjustbox}
\end{table*}

\begin{table*}
 			\label{table1}\centering\caption{Top-5 suggested words post event for query word ``iPhone'' on clean and noisy dataset}
\begin{adjustbox}{max width=\textwidth}
\begin{tabular}{p{1.79in}p{1.79in}p{1.79in}}
\hline
\multicolumn{1}{|p{1.79in}}{} & 
\multicolumn{1}{|p{1.79in}}{\textbf{Data Set Small/Clean}  } & 
\multicolumn{1}{|p{1.79in}|}{\textbf{Data Set Large/Noisy}} \\
\hline
\multicolumn{1}{|p{1.79in}}{\textbf{Classical/BoW \newline $n= \sim  600, \sim  1700$}} & 
\multicolumn{1}{|p{1.79in}}{\textit{`x', `shoot', `mode',\newline `shotoniphonex', `iphone$ \ldots $ '}} & 
\multicolumn{1}{|p{1.79in}|}{\textit{`x', `photo', `portrait',\newline `7', `que'}} \\
\hline
\multicolumn{1}{|p{1.79in}}{\textbf{Deterministic RI  \newline $n=12, r=6$}} & 
\multicolumn{1}{|p{1.79in}}{\textit{`x', `new', `iphone$ \ldots $ ',\newline `shoot', `potrait'}} & 
\multicolumn{1}{|p{1.79in}|}{\textit{`x', `shot', `puts',\newline `photography', `plus'}} \\
\hline
\multicolumn{1}{|p{1.79in}}{\textbf{Probabilistic RI } \newline r distributed uniformly in \newline $ \{ 2,3,4,5,6\}  , n=12$} & 
\multicolumn{1}{|p{1.79in}}{\textit{`x', `plus', `stay', \newline`love', `iphonex'}} & 
\multicolumn{1}{|p{1.79in}|}{\textit{`x', `shoot', `mode',\newline `shotoniphonex', `new'}} \\
\hline

\end{tabular}
\end{adjustbox}

\centering\caption{Angle between semantic vectors of top suggested words and ``iPhone'' pre and post event}\label{table3}
\begin{adjustbox}{max width=\textwidth}
\begin{tabular}{|l|l|l|l|l|}
\hline
words &
  \begin{tabular}[c]{@{}l@{}}Data Set Small\\ Pre-event\end{tabular} &
  \begin{tabular}[c]{@{}l@{}}Data Set Small\\ Post-event\end{tabular} &
  \begin{tabular}[c]{@{}l@{}}Data Set Large\\ Pre-event\end{tabular} &
  \begin{tabular}[c]{@{}l@{}}Data Set Large\\ Post-event\end{tabular} \\ \hline
\multicolumn{5}{|l|}{\textbf{Baseline}, $n=600 $ for Data set small, $n=1700$ for Data set large}                                                            \\ \hline
8               & 0.83             & 1.06             & 0.75            & 0.79            \\ \hline
x               & 1.28             & 0.83             & 1.28            & 0.72            \\ \hline
\multicolumn{5}{|l|}{\textbf{Deterministic RI} $r=6, n=12$}                                \\ \hline
8               & 0.53             & 0.61             & 0.66            & 0.51            \\ \hline
x               & 0.81             & 0.43             & 1.06            & 0.36            \\ \hline
\multicolumn{5}{|l|}{\textbf{Probabilistic RI} $r$ distributed uniformly in $\{2,3,4,5,6\}, n=12$} \\ \hline
8               & 0.66             & 0.57             & 0.50            & 0.42            \\ \hline
x               & 0.94             & 0.47             & 0.77            & 0.33            \\ \hline
\end{tabular}
\end{adjustbox}

 			\label{table2}\centering\caption{Time taken in milli Seconds for 10 iterations of algorithm for each representation}
\begin{adjustbox}{max width=\textwidth}
\begin{tabular}{|l|l|l|l|l|l|l|}
\hline
                                                                     & \multicolumn{3}{l|}{Data Set Small} & \multicolumn{3}{l|}{Data Set Large} \\ \hline
context range m                                                      & 2          & 5         & 8          & 2          & 5          & 8         \\ \hline
\begin{tabular}[c]{@{}l@{}}\textbf{Classical/BoW}\\ $n=\sim 600, \sim 1700$\end{tabular}  & 50.36      & 73.10     & 127.61     & 441.27     & 805.32     & 993.08    \\ \hline
\begin{tabular}[c]{@{}l@{}}\textbf{Deterministic RI}\\ $n=12, r=6$\end{tabular} & \textbf{35.62}      & 68.57     & 57.52      & 123.50     & \textbf{191.51}     & 262.16    \\ \hline
\begin{tabular}[c]{@{}l@{}}\textbf{Probabilistic RI} \\ r distributed uniformly in \\$\{2,3,4,5,6\}, n=12$ \end{tabular} & 45.87 & \textbf{44.08} & \textbf{47.17} & \textbf{110.31} & 194.97 & \textbf{222.04} \\ \hline
\end{tabular}
\end{adjustbox}

 \end{table*}

\vspace{-0.6cm}
\subsection{Results}
\label{results}
The following categories of experiments were performed for the proposed method:
\begin{enumerate}
    \item Experiments to validate the quality of words suggested by the proposed method and measure the robustness with respect to noise in the dataset
    \item Experiments demonstrating the efficacy of lesser dimensional Probabilstic RI embeddings with respect to computation time while being able to encode information efficiently
    \item Ablations to demonstrate superiority of the Probabilistic RI representations in depicting semantic shift as compared to other RI based representation 
\end{enumerate}To set a baseline for tracking the shift in semantic vectors of different words, context vectors are assigned as given by  Eqn. (\ref{context}) and semantic vectors are calculated using Eqn. (\ref{semantic})  for context(m) ranging from 1 to 10.  The angle between semantic vectors of different words and query word are calculated in pre and post data sets. Here we report the top 5 words ranked according to Eqn. (\ref{rank}) i.e. most relevant to the event happening with a context range of 5 and time taken in milliseconds taken by each representation method averaged over 10 iterations. 

\par Table 1 reports the results for the event `demonetization in India', with the query word as \textit{India}. Here one can observe the shift in the semantic vectors with respect to the query word. The top suggested word being \textit{demonitization}, the rest of the suggested words also reflect the other relevant aspects of the event, varying with each representation method. Table 2 presents the results on the event `corona virus outbreak' with the query word as \textit{virus}. It can be readily seen that before the pandemic broke out the suggested words such as, \textit{HIV, influenza}, show the common viruses prevalent prevalent before the corona broke out. However after the event the suggested words change drastically to \textit{corona, pandemic, vaccine} with almost none of them being actual viruses. This shift in suggested words accurately depicts the pandemic. 

\par In another set of experiments the effect of cleaned data and noisy data on the shift is studied. This is shown in Table 3 with the query word being \textit{iPhone}. It is observed that before the launch query word is closer to the word \textit{8} as people tweet about the iPhone 8 that is the prevailing iPhone version and post launch becomes closer to \textit{x} owing to the launch of the new version. We find that the algorithm correctly predicts words such as \textit{x}, \textit{new, portrait}. From Table 4, it can be seen for the query word \textit{iPhone} in both the clean and noisy datasets that the proposed method is able to track meaningful semantic shifts over time as the vocabulary evolves and is robust to noise.

\par It is noteworthy that while BoW representation method requires $600, 1700$ dimensional embeddings (as in Table 4) for the small and large data set respectively, the Probabilistic RI representations with just 12 dimensions are able to compete owing to the appropriately chosen value of r. The running times reported in Table 5 also show that the proposed probabilstic embeddings run up to  four times faster than BoW as the size of corpus increases by three times while maintaining semantic accuracy. We hypothesise this gain to be much larger in real time datasets where representations for a large corpus of words are to be maintained. For instance if $250000$ unique words are to be encoded, the one-hot embeddings would be of $250,000$ dimensions, however for a probability of orthogonality of $0.97$ with $r=3$, only $350$ dimensional embeddings will suffice (See Table 6). This makes the proposed representations highly scalable. 

\par We also plot the effect of probability of orthogonality on the accuracy of the representations to convey meaningful semantic information about the relationship between different words in a given dataset. To examine the accuracy with respect to the base case (BoW representations), probability of orthogonality is varied by varying ``n'' for a given``r''. The results are averaged over 20 iterations for all cases.
The metric used is :  $1-\vert \theta_i -  \theta_0\vert$, where
$\theta_i$ is the angle between initial and final semantic vectors of \textit{iPhone}  in RI case, 
$\theta_0$ is the angle between initial and final semantic vectors of \textit{iPhone} in base case. This is shown in Fig. \ref{figy1}, \ref{figy2}, where a cubic trendline has also been plotted. It can be noticed that with increasing the probability of orthogonality the accuracy of the representations increases.

\par An ablation experiment is done to compare the Probabilstic RI method to that proposed in \cite{Chatterjee2013EffectON} for the event `launch of iPhone X'. This is done to study the effect of cancellation of the non-zero entries while summing up the context vectors to get semantic vectors. While the representations in \cite{Chatterjee2013EffectON} forego this cancellation, the proposed representations do not. For this purpose, we plot the change in the angle between words \textit{X} and \textit{iPhone} pre and post event to observe semantic shift. These are shown in Fig. \ref{figz1}, \ref{figz2}. It can be observed that the Probabilistic RI representation performs better in depicting the semantic shift as the amount of deviation is larger in this case although somewhat noisy. This is because of the fact the +1's and -1's are distributed randomly instead of upper and lower halves respectively as in \cite{Chatterjee2013EffectON}. Hence, the proposed representations are found to be better suited for the algorithm to detect a significant semantic shift to suggest words.

\begin{subfigures}
\begin{figure}[h]
   \begin{minipage}{0.45\textwidth}
     \centering
     \includegraphics[width=\linewidth]{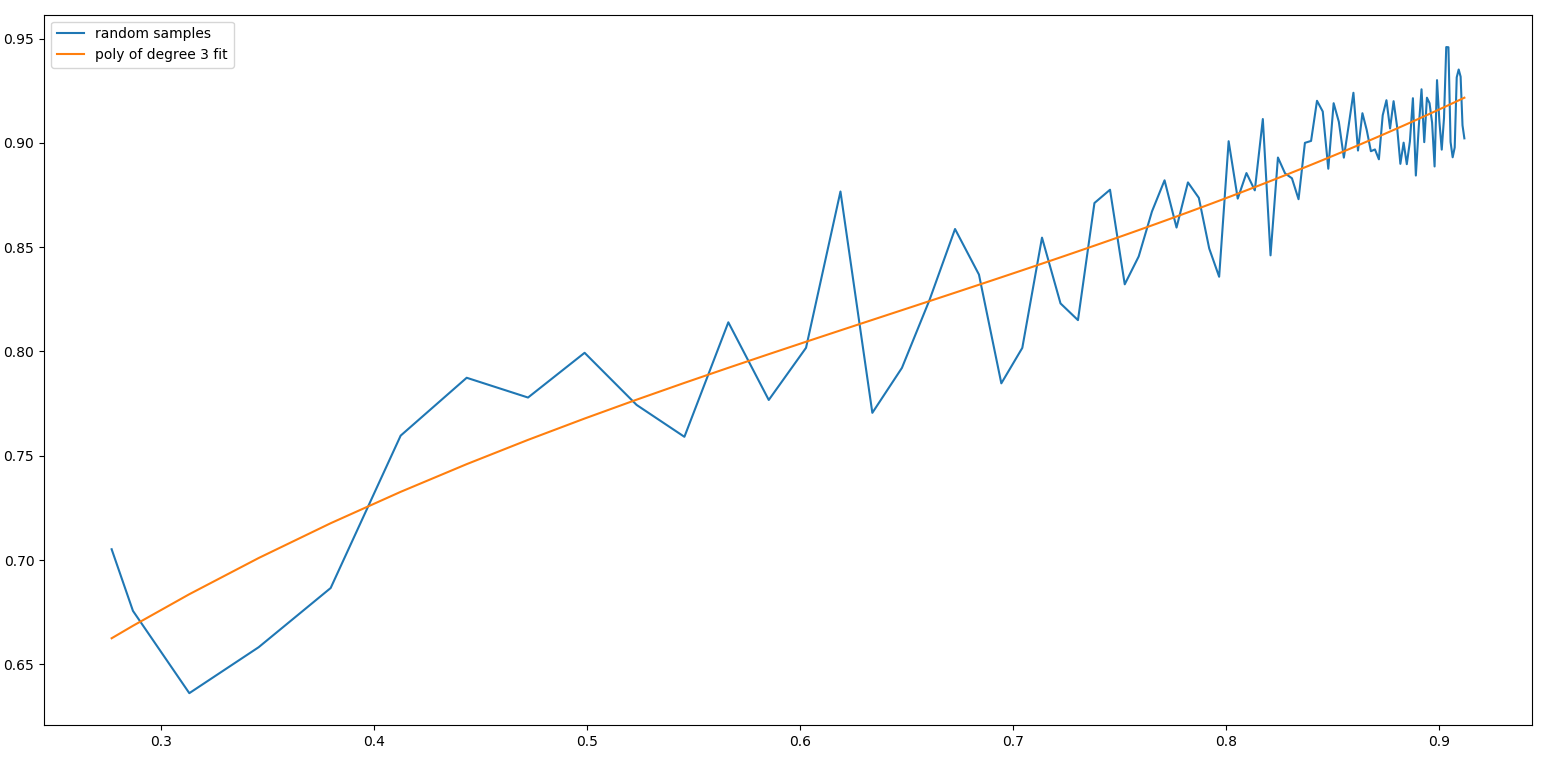}
     \caption{Accuracy v.s Probability of orthogonality, cutoff n=300 for small data set with ~600 unique words}
    \label{figy1}
   \end{minipage}\hfill
   \begin{minipage}{0.45\textwidth}
     \centering
     \includegraphics[width=\linewidth]{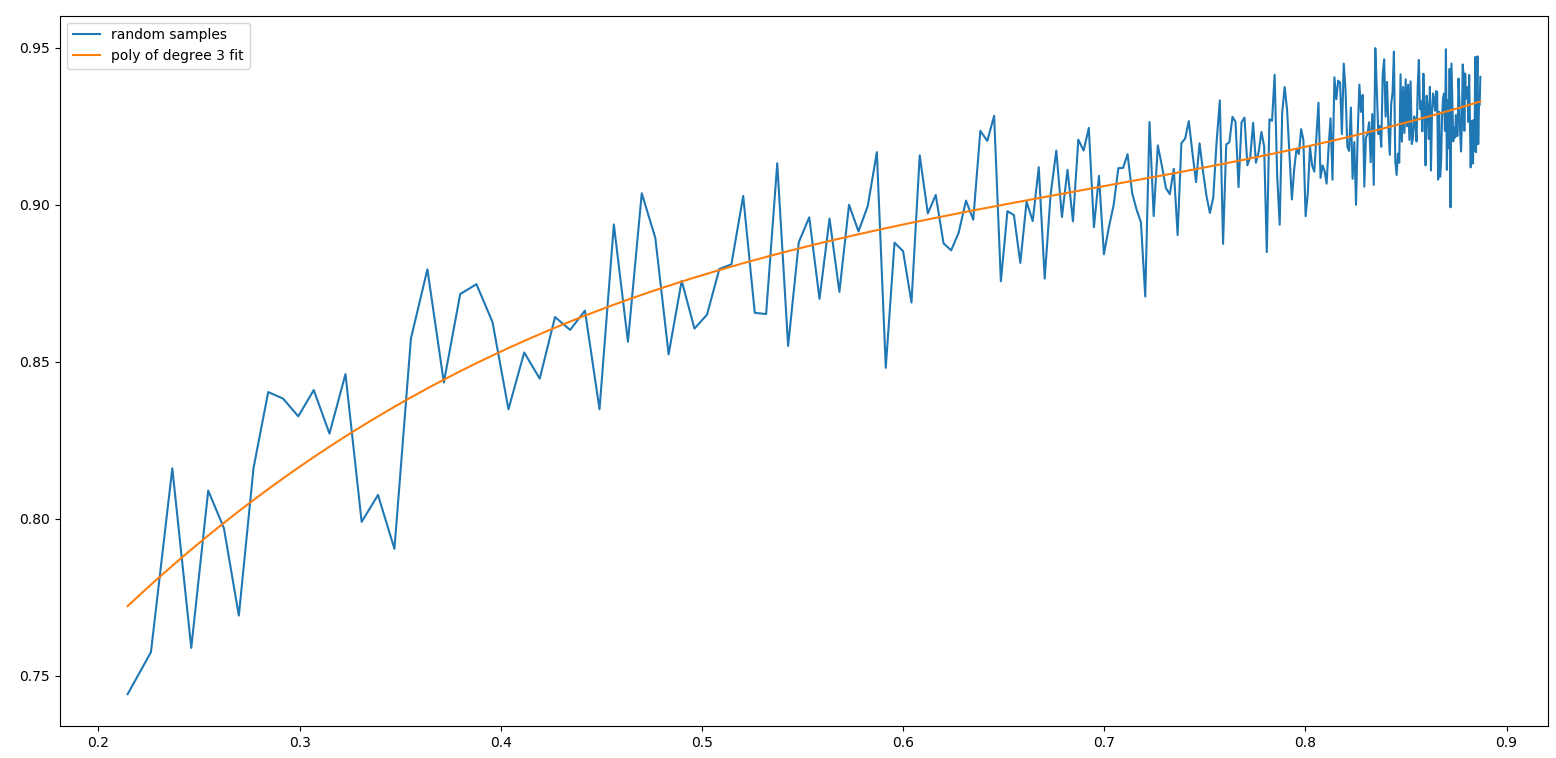}
     \caption{Accuracy v.s Probability of orthogonality for cutoff n=300 for large data set with ~1700 unique words}
    \label{figy2}
   \end{minipage}
\end{figure}
\end{subfigures}

\begin{subfigures}
\begin{figure}[h]
   \begin{minipage}{0.49\textwidth}
     \centering
     \includegraphics[width=\linewidth, height=3.5cm]{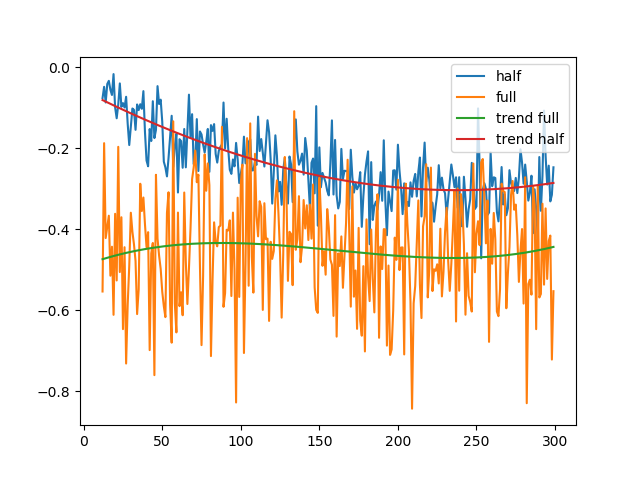}
     \caption{Comparison of Semantic shift between iPhone and X  v.s. dimension of context vectors for representation proposed in \cite{Chatterjee2013EffectON} and our representation for m=5, small data set}
    \label{figz1}
   \end{minipage}\hfill
   \begin{minipage}{0.49\textwidth}
     \centering
     \includegraphics[width=\linewidth, height=3.5cm]{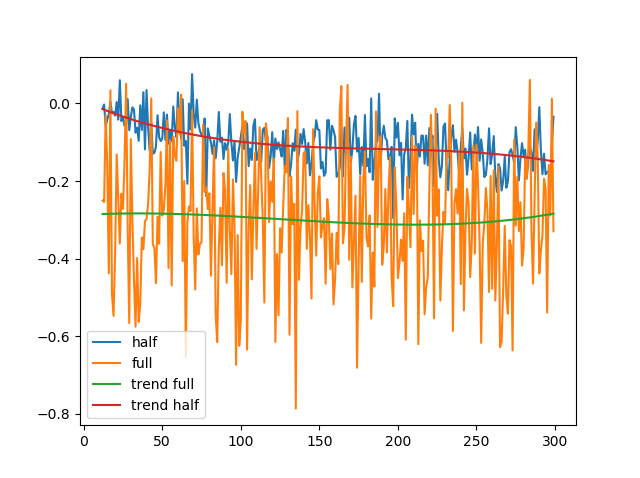}
     \caption{Comparison of Semantic shift between iPhone and X v.s. dimension of context vectors for representation proposed in \cite{Chatterjee2013EffectON} and our representation for m=5, large data set}
    \label{figz2}
   \end{minipage}
\end{figure}

\end{subfigures}

\begin{table}[h]

\label{table4}
 			\centering\caption{Cut-off values of n for RI case along with word capacity}
\begin{adjustbox}{width=\textwidth, height =1cm}

\begin{tabular}{p{0.53in}p{1.83in}p{2.08in}p{1.83in}}
\hline
\multicolumn{1}{|p{0.53in}}{} & 
\multicolumn{1}{|p{1.83in}}{$p>90\%$ } & 
\multicolumn{1}{|p{2.08in}}{$p>95\%$ } & 
\multicolumn{1}{|p{1.83in}|}{$p>97.5\%$ } \\
\hline
\multicolumn{1}{|p{0.53in}}{r=2} & 
\multicolumn{1}{|p{1.83in}}{40  N= 3120} & 
\multicolumn{1}{|p{2.08in}}{80  N= 12640} & 
\multicolumn{1}{|p{1.83in}|}{160  N=50880} \\
\hline
\multicolumn{1}{|p{0.53in}}{r=3} & 
\multicolumn{1}{|p{1.83in}}{87  N= 847960} & 
\multicolumn{1}{|p{2.08in}}{177  N= 7268800} & 
\multicolumn{1}{|p{1.83in}|}{357  N= 60156880} \\
\hline
\multicolumn{1}{|p{0.53in}}{r=4} & 
\multicolumn{1}{|p{1.83in}}{153  N= 351165600} & 
\multicolumn{1}{|p{2.08in}}{314  N= 6357666016} & 
\multicolumn{1}{|p{1.83in}|}{634  N= 106696002016} \\
\hline
\multicolumn{1}{|p{0.53in}}{r=5} & 
\multicolumn{1}{|p{1.83in}}{238  N= 195204469824} & 
\multicolumn{1}{|p{2.08in}}{488  N= 7230043079424} & 
\multicolumn{1}{|p{1.83in}|}{988  N= 248514122298624} \\
\hline
\multicolumn{1}{|p{0.53in}}{r=6 } & 
\multicolumn{1}{|p{1.83in}}{341  N= 133710757852672} & 
\multicolumn{1}{|p{2.08in}}{701  N= 10323765985980160} & 
\multicolumn{1}{|p{1.83in}|}{1000+} \\
\hline

\end{tabular}
\end{adjustbox}
 \end{table}

\section{Conclusion}

In this paper we have shown that embeddings generated through Random Indexing of context vectors can be an efficient substitute for the sparse BoW embeddings used in traditional NLP tasks with highly dynamic datasets. The present work also provide a framweork for analysing the goodness of the embeddings to represent semantic information by analysing the probability of orthogonality. The RI embeddings are shown to be computed faster than BoW embeddings and also provide an added scope of slicing through time in the wake of their constant dimension. 
\par Furthermore, we provide with a novel idea of Probabilistic RI embeddings, which are shown to have larger capacity than both RI and BoW embeddings for a given dimension and can be used for online tasks where the corpus keeps changing. The ammortized time complexity of Probabilistic RI embeddings is marginally better than that of RI embeddings and much better than sparse BoW. They also have better representation capacity and probability of orthogonality. These may be further explored for online tasks. 
\par We further develop an algorithm to track semantic shift in relationship of words from scratch. The event detection task is closely related with tracking semantic shift in word meanings. These embeddings prove to be performing at par with the sparse BoW embeddings in tracking semantic shift for the task at hand and are more efficient in computation as has been shown in the results. These embeddings can also be chosen as a better initialisation point for further learning tasks where sparse BoW is rather expensive in computation and memory if the corpus is large in size. Furthermore, it is not feasible to use pre-trained embeddings or train them in a short time. However, the problem of dynamic detection of change in semantic shift becomes tractable with Probabilstic RI, thus making the proposed method novel.

\bibliographystyle{spmpsci}
\bibliography{ri}
\end{document}